\newcommand*{\ldblbrace}{\{\mskip-5mu\{}
\newcommand*{\rdblbrace}{\}\mskip-5mu\}}
\newtheorem{lemma}{Lemma}
\newtheorem{proposition}{Proposition}
\definecolor{harvardcrimson}{rgb}{0.79, 0.0, 0.09}
\DeclareMathAlphabet{\mathbbmsl}{U}{bbm}{m}{sl}
\title{Wasserstein Graph Distance Based on $L_1$-Approximated Tree Edit Distance between Weisfeiler-Lehman Subtrees}
\author[1]{Zhongxi Fang \thanks{fzx@akane.waseda.jp}}
\author[1]{Jianming Huang \thanks{koukenmei@toki.waseda.jp}}
\author[1]{Xun Su \thanks{suxun\_opt@asagi.waseda.jp}}
\author[1,2]{Hiroyuki Kasai \thanks{hiroyuki.kasai@waseda.jp}}
\affil[1]{Department of Computer Science and Communications Engineering, WASEDA University}
\affil[2]{Department of Communications and Computer Engineering, WASEDA University}
\begin{document}
\maketitle

\begin{abstract}
The Weisfeiler-Lehman (WL) test is a widely used algorithm in graph machine learning, including graph kernels, graph metrics, and graph neural networks. However, it focuses only on the consistency of the graph, which means that it is unable to detect slight structural differences. Consequently, this limits its ability to capture structural information, which also limits the performance of existing models that rely on the WL test. This limitation is particularly severe for traditional metrics defined by the WL test, which cannot precisely capture slight structural differences. In this paper, we propose a novel graph metric called the Wasserstein WL Subtree (WWLS) distance to address this problem. Our approach leverages the WL subtree as structural information for node neighborhoods and defines node metrics using the $L_1$-approximated tree edit distance ($L_1$-TED) between WL subtrees of nodes. Subsequently, we combine the Wasserstein distance and the $L_1$-TED to define the WWLS distance, which can capture slight structural differences that may be difficult to detect using conventional metrics. We demonstrate that the proposed WWLS distance outperforms baselines in both metric validation and graph classification experiments.
\end{abstract}

\section{Introduction}
In recent years, the remarkable performance improvements of graph neural networks (GNNs) have triggered a surge of research on their applications in various domains, such as recommendation systems~\cite{gnnfrs} and drug and material discovery~\cite{10.1093/bib/bbab159, Takamoto2022}. At the same time, a critical need has arisen for accurate tools that can measure graph similarity and distance to enable effective graph sorting and analysis. However, comparing graph structures is a difficult problem that has been studied for decades~\cite{bunke1998graph, gao2010survey, pmlr-v97-titouan19a}. 

Graph edit distance (GED) is a classical approach to this problem. However, GED is NP-hard and still requires high time complexity, even with its well-known approximation algorithms. For instance, the popular A*-Beamsearch~\cite{fsacged} has sub-exponential time complexity. Learning-based methods such as SimGNN~\cite{singnn} combine GNNs and other neural networks to estimate the similarity between graphs. However, these methods require an accurate similarity score as a label, which limits their application scope. Additionally, it has been pointed out that GNNs cannot fully exploit the structural information of graphs~\cite{Errica2020A}. Random-walk-based graph embeddings, such as DeepWalk~\cite{perozzi2014deepwalk} and Node2Vec~\cite{grover2016node2vec}, provide another way of describing structural information. Although they can capture the regularity of node connections, they cannot handle previously unseen nodes due to their use of transductive learning. Furthermore, finding appropriate parameters for random walks can be costly.

In contrast, graph kernels~\cite{nikolentzos2021graph} are a class of methods that specialize in measuring the similarities of graph structures. Most of them are based on $\mathcal{R}$-convolutional theory~\cite{haussler1999convolution}, which computes graph similarity by decomposing a graph into subgraphs, measuring the similarities between subgraphs, and aggregating them. Some well-known graph kernels produce more stable and competitive classification results compared to GNNs. We aim to measure even slight differences in the entire graph structures by correctly measuring the differences between subgraphs. To this end, we delve into one of the most influential graph kernels, the Weisfeiler-Lehman (WL) subtree kernel~\cite{shervashidze2009fast}.

The WL subtree kernel, also known as the WL kernel, is a pioneering graph kernel that uses a neighborhood aggregation scheme. It was inspired by the WL test~\cite{weisfeiler1968reduction}, which provides an approximate solution to the graph isomorphism problem. Due to its stable and high performance in graph classification tasks and its similarity to the message-passing algorithm of GNNs, the WL kernel is often used as a baseline for GNNs~\cite{morris2019weisfeiler, bodnar2021weisfeiler, wijesinghe2022a}. Furthermore, previous studies have shown that the WL framework can provide high accuracy~\cite{nikolentzos2021graph}. However, we argue that the WL kernel's measure of graph similarity is coarse, and there are two main reasons for this. The first problem is that the ability to describe structural information is weak. This problem stems from the fact that WL test focuses only on the consistency of the graph, in particular the consistency of the subgraphs composed of a node and its neighborhood; the WL test projects different subgraphs to different integer values using the hash function and compares the results for subgraph matching, which results in the loss of specific information about the connections between nodes. The second problem is that the simplicity of the measure limits the expressive power of similarity. Graph kernels are typically computed from two parts: a node-level measure that measures the similarity of subgraphs and a graph-level measure that computes the similarity of entire graphs using subgraph similarities. The WL kernel measures the similarity between nodes by subgraph matching and then sums the similarities of all pairs of nodes to compute the graph similarity. To address the first problem, \citep{schulz2022generalized} proposed a relaxed WL kernel that defines the similarity between subgraphs more finely by treating similar subgraphs as identical. To address the second problem, \citep{Togninalli19} proposed the Wasserstein WL (WWL) distance that applies the Wasserstein distance~\cite{peyre2019computational} to the graph-level measure.

Motivated by the observations mentioned above, we aim to enhance the descriptive power of structural information without disrupting the mechanism of the WL test. Specifically, we introduce a WL subtree, a subgraph consisting of a node and its neighborhood structure, in accordance with the mechanism of the WL test. The WL subtree is a rooted unordered tree that corresponds to the node label obtained from the WL test. The concept of the WL subtree was originally proposed by~\citep{shervashidze2011weisfeiler}, and in earlier studies, WL subtrees were used only to interpret the WL kernel and analyze the expressive power of GNNs~\cite{sato2020survey}. In this paper, however, we treat them as structural information of node neighborhoods, which differentiates our proposed method from others. We will discuss further details later and summarize our key contributions as follows:
\begin{itemize}
\item We clarify that the WL test cannot preserve inter-node connection information, and we demonstrate that the metric based on the WL test is coarse.
\item We introduce the WL subtree as structural information in the neighborhood of a node, which enables us to define the tree edit distance between nodes. To compare WL subtrees, we use $L_1$-approximated tree edit distance ($L_1$-TED) in this paper.
\item We design a tree hash function and ensure that the probability of hash collision is theoretically low. Additionally, we propose a fast algorithm for computing $L_1$-TED using this tree hash function.
\item We propose a new fine-grained graph metric, Wasserstein Weisfeiler-Lehman Subtree (WWLS) distance, which can numerically represent slight structural differences.
\end{itemize}

\section{Preliminaries}
Bold typeface lower-case and upper-case letters such as $\mathbf{x}$ and $\mathbf{X}$ respectively denote a vector and a matrix. $\mathbf{x}_i$ denotes the $i$-th element of $\mathbf{x}$, $\mathbf{X}_i$ denotes the $i$-th row vector of $\mathbf{X}$, and $\mathbf{X}_{i,j}$ denotes the element at $(i,j)$ of $\mathbf{X}$. $\mathbb{R}^n_+$ denotes the space of nonnegative $n$-dimensional vectors, and $\mathbb{R}^{m\times n}_+$ denotes the space of nonnegative $m\times n$ size matrices. $\Delta_n$ denotes the probability simplex with $n$ bins. $\delta_x$ denotes the delta function at $x$, and $\delta(\cdot, \cdot)$ denotes the Kronecker delta. $\mathds{1}_n$ denotes an $n$-dimensional all-ones vector: $\begin{pmatrix} 1,\dots, 1\end{pmatrix}^{\rm T} \in \mathbb{R}^{n}$. $\lbrace \dots \rbrace$ denotes the set that does not allow duplication of elements, and $\ldblbrace \dots \rdblbrace$ denotes the multiset that allows elements to be repeated. $\mathcal{A} = \lbrace a_1, \dots, a_n \rbrace = \{ a_i \}_{i=1}^n$ denotes a set $\mathcal{A}$ consisting of $a_i$. {$\mathbb{F}[x_1, \dots, x_n]$ denotes a polynomial ring formed from the set of polynomials in $n$ variables over a field $\mathbb{F}$.} {$\mathbb{Z}/m\mathbb{Z}$ denotes a ring of integers modulo $m$, where $m \in \mathbb{Z}$ and $m \geq 2$.} $\mathbb{N}_+$ denotes the set of natural numbers starting from 1, and we define $\mathbb{N}_0 = \mathbb{N}_+ \cup \{0\}$. The graph data structure consists of a set of nodes ${V}$ and a set of edges $E \subseteq V^2$, which we write ${G}(V, E)$ or simply as $G$. In this paper, we consider only {\it undirected} graphs. $|V|$ denotes the number of nodes. $\mathcal{N}_{{G}}(v) = \{ u\in {V} \mid (v,u)\in {E}\}$ denotes the adjacent nodes of $v$ in $G$. {${\rm deg}(v)$ denotes the degree of node $v$. Node $v$ might also have a categorical label, which we write $\ell(v) \in \mathbb{N}_+$. $T$ denotes a tree. In particular, it refers to a rooted unordered {\it WL subtree} herein. For a tree $T$ with the root node of $v$, we express it as $T(v)$. $\mathcal{V}(T)$, $\mathcal{E}(T)$, and $\mathcal{L}(T)$ respectively denote the set of nodes, edges, and leaves of $T$. ${\rm dep}_T(v)$ denotes the depth of node $v$ in $T$. $T_1 \simeq T_2$ represents an isomorphism between $T_1$ and $T_2$. A non-root node $v\in \mathcal{V}(T)$ has a {\it parent}, written as {\rm parent}($v$). A non-leaf node $v\in \mathcal{V}(T)$ has $n$ children, written as $\mathcal{C}(v) = \lbrace c_i(v)\rbrace_{i=1}^n$, where $c_i(v)$ is the $i$-th child of $v$. A subtree $T'$ of $T$ is {\it complete} if, for node $v \in \mathcal{V}(T)$, {\rm parent}($v$) implies $v\in \mathcal{V}(T')$. We write $t$ for such a complete subtree. In addition, for complete subtree $t$ whose root node is $v$, we write $t(v)$. For other notations about $T$, we use the same method for $t$.

\section{Related Work}

\paragraph{Wasserstein distance.}
The Wasserstein distance is derived from the optimal transport (OT) problem, which attempts to determine the minimum transport cost by finding an optimal transportation plan between two probability distributions. The {\it discrete case} is defined as follows.

Let $\Delta_m = \{ \mathbf{a} \in \mathbb{R}^m_+ \mid \sum_{i=1}^m \mathbf{a}_i = 1\}$ and $\Delta_n = \{ \mathbf{b} \in \mathbb{R}^n_+ \mid \sum_{j=1}^n \mathbf{b}_j = 1\}$ denote two simplexes of the histogram with $m$ and $n$ in the same matrix space. Their respective probability measures are $\alpha = \sum_{i=1}^m \mathbf{a}_i {\delta_{x_i}}$ and $\beta = \sum_{i=1}^n \mathbf{b}_j {\delta_{y_j}}$. $\mathbf{C}\in \mathbb{R}^{m\times n}_+$ is a distance matrix, where $\mathbf{C}_{i,j}$ signifies the transportation cost (ground distance) between bin $i$ and bin $j$. $\mathbf{P} \in \mathbb{R}^{m \times n}_+$ is a transportation matrix, where $\mathbf{P}_{i,j}$ describes the amount of mass flowing from bin $i$ to bin $j$. The minimum total transportation cost between $\alpha$ and $\beta$, known as the {\it Wasserstein distance} associated with $\mathbf{C}$, is defined as
\begin{equation}
	\mathcal{W}(\alpha, \beta) = \underset{\mathbf{P}\in \mathbf{U}(\mathbf{a},\mathbf{b})}{\rm min} \sum_{i=1}^m \sum_{j=1}^n \mathbf{C}_{i,j}\mathbf{P}_{i,j},
	\label{eq:wass}
\end{equation}
where $\mathbf{U}(\mathbf{a},\mathbf{b}) = \{ \mathbf{P} \in \mathbb{R}^{m \times n}_+ \mid \mathbf{P}\mathds{1}_n = \mathbf{a} \ {\rm and} \ \mathbf{P}^{\rm T}\mathds{1}_m = \mathbf{b} \}$. The EMD~\citep{bonneel2011displacement} and Sinkhorn's algorithm~\cite{cuturi2013sinkhorn} are well-known methods that can solve the problem empirically with $\mathcal{O}(n^2)$ when $m=\mathcal{O}(n)$.

\paragraph{Weisfeiler-Lehman (WL) test and its kernel and distance forms.}
The graph isomorphism problem is an NP intermediate problem for determining whether two finite graphs are isomorphic~\citep{babai2016graph}. The WL test is an approximate solution to the problem that runs in linear time with respect to the size of the graph. It involves the aggregation of the node labels and their adjacent nodes to generate ordered strings, which are then hashed to generate new node labels. As the number of iterations increases, these labels will represent a larger neighborhood of nodes, allowing more extensive substructures to be compared~\cite{Togninalli19}. The WL test follows a recursive scheme, updating each node label multiple times. Given ${G}({V},{E})$, let $\ell^{(k)}(v)$ be the node label of $v\in V$ at the $k$-th iteration of the WL test. In particular, $\ell^{(0)}(v)$ is the original node label. Then the update formula for each node is
\begin{equation}
	\ell^{(k+1)}(v) = {\rm HASH}\left(\ell^{(k)}(v), \ldblbrace \ell^{(k)}(u) \mid u \in \mathcal{N}_{G}(v) \rdblbrace \right),
\label{eq:WL}
\end{equation}
where ${\rm HASH}(\cdot, \cdot)$ is the perfect hash that returns an integer value. The WL subtree kernel {\it a.k.a.} WL kernel is defined as the similarity of two graphs in terms of the inner product of the graph feature vectors as follows:
\begin{equation}
    K_{\rm WL}({G}, {G}') = \varphi({G}, \Sigma_h)^{\rm T} \varphi({G}', \Sigma_h),
    \label{eq:WLK}
\end{equation}
where $\varphi(\cdot, \cdot)$ is the graph feature vector, and $\Sigma_h = \{\ell^{(0)}, \ell^{(1)}, \dots \}$ is the set of all types of node labels that appear in ${G}$ and ${G}'$ with $h$ iterations of the WL test. $\varphi({G}, \Sigma_h)$ is  specifically defined as $\left(C_{\rm WL}({G}, \ell^{(0)}), \dots, C_{\rm WL}({G}, \ell^{(|{\Sigma}_h|)}) \right)\in \mathbb{N}_0^{|\Sigma_h|+1}$, where $C_{\rm WL}({G}, \ell^{(i)})$ is the function that returns the number of the occurrences of $\ell^{(i)}$ in ${G}$. The Wasserstein WL (WWL) distance, as its name implies, is a graph metric that combines the Wasserstein distance and the WL test. By applying the WL test $h$ times to node $v$, we obtain a sequence of $h+1$ different node labels that contains the original node label: $\mathcal{F}(v) = \left(\ell^{(0)}(v), \cdots, \ell^{(h)}(v) \right) \in \mathbb{N}_+^{h+1}$. It is called the {\it WL feature} of node $v$. The categorical case of the WWL is computed by the following optimization problem:
\begin{equation}
	\underset{\mathbf{P}\in \mathbf{U}(\mathbf{a},\mathbf{b})}{\rm{min}} \sum_{i=1}^{|V|} \sum_{j=1}^{|V'|} Ham \left(\mathcal{F}(v_i), \mathcal{F}(v_j)\right) \mathbf{P}_{i,j},
\end{equation}
where $Ham(\cdot, \cdot)$ is the normalized Hamming distance between two WL features.
\begin{figure}[!t]
\centering
\includegraphics[width=\linewidth]{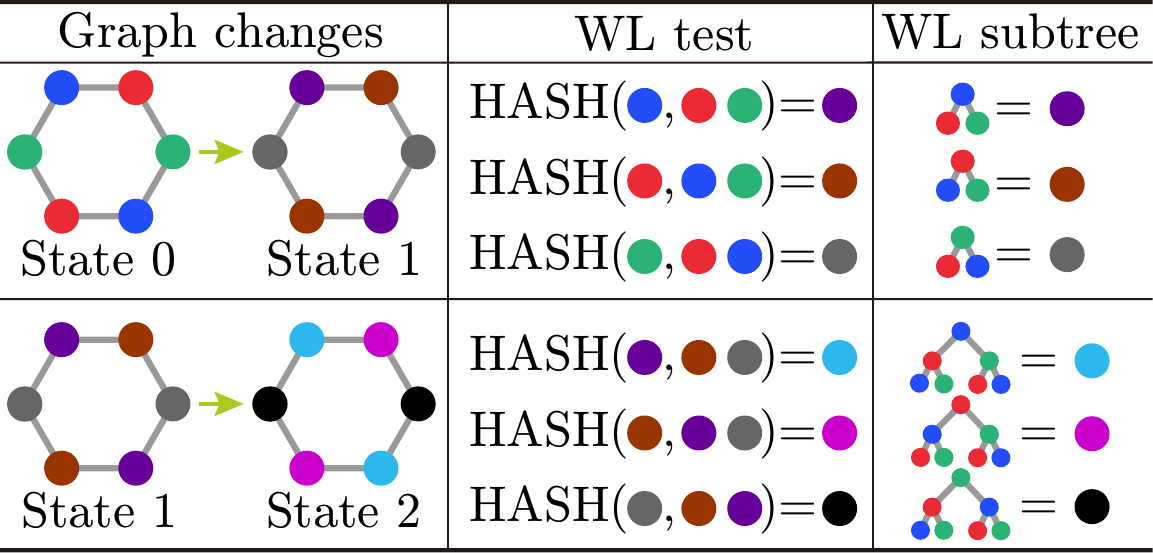}
\caption{Relationship between the WL test and the WL subtree. State 0 is the initial graph. Applying the WL test to State 0 yields State 1. Repeating this process on State 1 yields State 2. Each row in the figure shows the output of the WL test and the corresponding WL subtree for that state.}
\label{fig:WLsubtree01}
\end{figure}

\section{Problems in WL Kernel and WWL Distance}

\paragraph{Structural properties of WL test.}

The new label generated by the WL test is an integer hash value corresponding to the newly constructed tree, also known as the {\it WL subtree}. This tree is a rooted unordered tree with the following properties: (i) the root of the tree is the target node of the WL test, (ii) the tree is height-balanced, and (iii) the depth of each leaf is equal to the number of iterations. Figure~\ref{fig:WLsubtree01} illustrates the relationship between the WL test and the corresponding WL subtree. Note that the WL subtree contains inter-node connection information, which consists of the link information between the target node and its neighborhood. By performing the WL test $h$ times for a node, the WL subtree captures the inter-node connection information of the subgraph within the $h$-hop radius from the node. However, this critical structural information is lost because the WL test compresses the WL subtree into an {\it integer} value.

\paragraph{Problem definition.}
This paragraph discusses the simplicity of the measures in the WL kernel and the WWL distance. While the WL kernel has been successful for graph classification tasks, the simplicity of the measure in Eq.~(\ref{eq:WLK}) limits its ability to measure graph similarity. We can rewrite Eq.~(\ref{eq:WLK}) as $K_{\rm WL}({G}, {G}') = \sum_{i=0}^{h} \sum_{v\in {V}} \sum_{v'\in {V}'} \delta(\ell^{(i)}(v), \ell^{(i)}(v'))$. This equation shows that the WL kernel evaluates the node similarity score as either 1 or 0. Since each type of node label represents one type of WL subtree,  the WL kernel only judges the consistency of WL subtrees. This measure is suitable for graph isomorphism problems because they aim to determine whether two graphs are isomorphic or not, and this can be accomplished through binary judgments of 1 (isomorphic) or 0 (non-isomorphic). However, there are problems when measuring graph similarity. We can consider the following two situations. (i) First, we consider two nodes with the same neighborhood structure. If these two nodes have the same label, then the similarity is 1; otherwise, it is 0. In other words, if the labels do not match, the similarity is 0, regardless of how similar the neighborhoods are. (ii) Next, we consider two nodes with the same label but different neighborhood structures. In this case, the similarity is also 0. This extreme measure is not friendly to quantification, so the WL kernel does not measure fine-grained similarity at the node and graph levels. The WWL distance, on the other hand, uses a more advanced measure called the Wasserstein distance to improve the measurement capability at the graph level. However, for the categorical embedding of the WWL distance, its node-level measure remains a problem. The WWL distance takes the WL feature as a node feature and uses the normalized Hamming distance to define the ground distance. The dimension of the WL feature is $h+1$ if one runs the WL test $h$ times. It is noteworthy that a property of the WL test is that if two labels differ at iteration $k_0$ (where $k_0 \geq0$), then labels obtained by subsequent updates at iteration $k'>k_0$ are also different. Therefore, the Hamming distance between two WL features can only take at most $h+1$ different values with $h$ iterations. Combined with the fact that $h$ usually takes small values, it cannot capture the similarity between nodes with different starting labels and similar neighborhoods. Furthermore, since the practical effect of OT depends on the ground distance, pairwise matching of two graphs may not work well.

\section{Proposed Method}
\subsection{Tree Edit Distance between WL Subtrees}
Instead of using node labels to define the inter-node metric as in the related methods of the WL test, we use the WL subtree to compute the distance between tree structures. Given two nodes, $v$ and $v'$, in different graphs, we define the metric between the WL subtrees of $v$ and $v'$ using the tree edit distance (TED). The TED between unordered trees is a MAX SNP-hard problem, and this class of problems has constant-factor approximation algorithms but no approximation schemes unless P=NP. Therefore, it usually requires high time complexity. To solve this problem, we use an $L_1$-approximated TED ($L_1$-TED)~\citep{garofalakis2005xml, fukagawa2009constant}.

\begin{figure}[t]
	\centering
	\includegraphics[width=\linewidth]{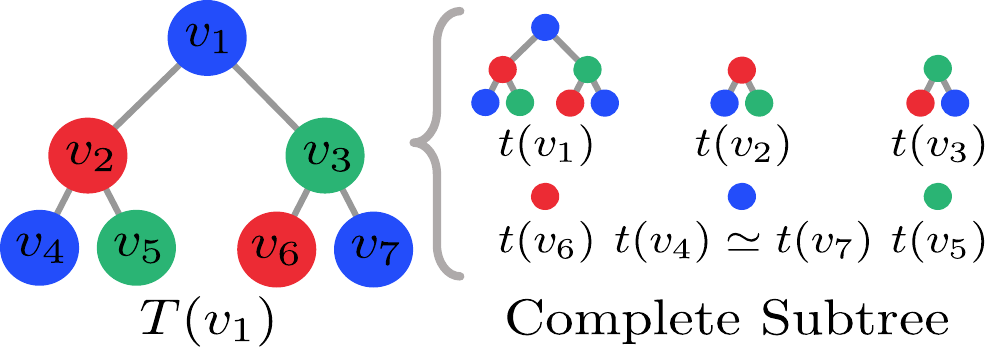}
	\caption{Illustration of the notation used for the WL subtree and complete subtree. $v_1$ is the blue node of the initial graph in Figure~\ref{fig:WLsubtree01}. We obtain $T(v_1)$ by 2 iterations of the WL test.}
	\label{fig:WLsubtree02}
\end{figure}

Before formally introducing our proposed algorithm, we introduce several necessary notations using Figure~\ref{fig:WLsubtree02}. $v_1$ is the blue node in Figure~\ref{fig:WLsubtree01}. By performing the WL test twice, we obtain a WL subtree rooted at $v_1$, which we designate as $T(v_1)$. $T(v_1)$ has seven nodes: $v_1, v_2, \dots, v_7$. Among them, $v_1, v_4$, and $v_7$ are fundamentally the same, but we treat all nodes differently. We denote the node-set of $T(v_1)$ as $\mathcal{V}(T(v_1))$. There are seven complete subtrees in $T(v_1)$: $t(v_1), t(v_2), \dots, t(v_7)$. Since $t(v_4)\simeq t(v_7)$, we regard them as identical. Thus, there are complete subtrees of six types.

\begin{figure}[t]
	\centering
 	\includegraphics[width=\linewidth]{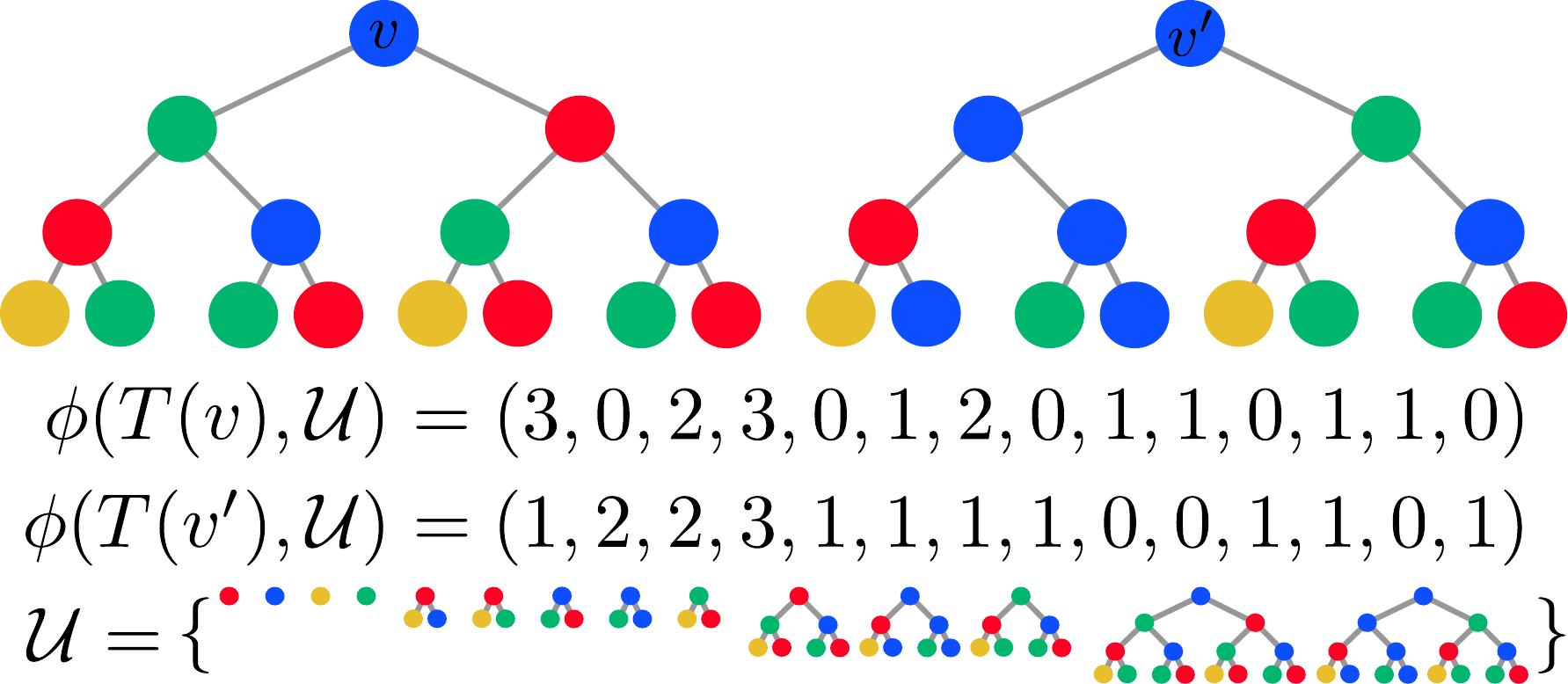}
	\caption{Illustration of the node embedding function. $v$ and $v'$ are the root nodes of two different WL subtrees, denoted as $T(v)$ and $T(v')$, respectively. Their feature vectors are $\phi(T(v), \mathcal{U})$ and $\phi(T(v'), \mathcal{U})$, respectively. $\mathcal{U}$ denotes the set of all types of complete subtrees for $T(v)$ and $T(v')$.}
	\label{fig:WLsubtree03}
\end{figure}

\paragraph{$\boldsymbol{L_1}$\textbf{-Approximated TED between WL subtrees.}}
To compare two nodes, $v$ and $v'$, we first construct their WL subtrees. Next, we compute the $L_1$ norm of the difference between the node feature vectors of $T(v)$ and $T(v')$. These operations are defined by the distance function $d_\phi$: $\mathcal{T}\times \mathcal{T} \rightarrow \mathbb{Z}/M\mathbb{Z}$, where $\mathcal{T}$ is the set of all types of WL subtrees and $M$ is a prime number. Formally, we define
\begin{equation}
	d_{\phi} \left(T(v), T(v')\right) = || \phi(T(v), \mathcal{U}) - \phi(T(v'), \mathcal{U}) ||_1,
	\label{eq:ted}
\end{equation}
where $\phi(\cdot, \cdot)$ is the feature vector of the corresponding tree. $\mathcal{U} = \lbrace t_1, t_2, \dots \rbrace$ is the set of all types of complete subtrees of $T(v)$ and $T(v')$, and any two complete subtrees $t_i, t_j\in \mathcal{U}$ are $t_i \not \simeq t_j$ for $i\neq j$. We define $\phi(T, \mathcal{U})$ as $\left(C_t(T, t_1), \dots, C_t(T, t_{|\mathcal{U}|})\right) \in \mathbb{N}_0^{|\mathcal{U}|}$, where $C_t(T, t_i)$ is a function that returns the number of occurrences of $t_i$ in $T$. Figure~\ref{fig:WLsubtree03} presents an intuitive description of $\phi(\cdot, \cdot)$. Using the properties proved by~\citep{fukagawa2009constant}, the true TED $d_{\rm TED}(\cdot, \cdot)$ between $T(v)$ and $T(v')$ can be bounded as follows:
\begin{equation}
	\frac{d_{\phi}(T(v), T(v'))}{2h+2} \leq d_{\rm TED}(T(v), T(v')) \leq d_{\phi}(T(v), T(v')),
\label{eq:bound}
\end{equation}
where $h$ denotes the height of $T(v)$ and $T(v')$. It is noteworthy that the height of the WL subtree is equal to the number of iterations. This inequality implies that a smaller $h$ has closer $d_{\phi}(\cdot, \cdot)$ to $d_{\rm TED}(\cdot, \cdot)$.
\begin{figure}[t]
	\centering
 	\includegraphics[width=\linewidth]{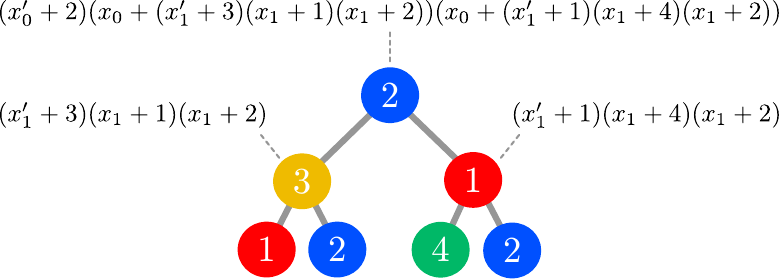}
	\caption{Illustration of Eq~(\ref{eq:hash}). Red, blue, yellow, and green nodes are labeled as 1, 2, 3, and 4, respectively. Variables at depth 0 are denoted as $x_0$ and $x_0'$, and variables at depth 1 are denoted as $x_1$ and $x_1'$. The polynomial represents the complete subtree with non-leaf nodes as roots.}
	\label{fig:WLsubtree04R}
\end{figure}

\paragraph{A fast algorithm for \boldsymbol{$L_1$}-TED.}
To compute $\phi(T(v), \mathcal{U})$ and $\phi(T(v'), \mathcal{U})$, one must know all types of complete subtrees that appear in $T(v)$ and $T(v')$. However, enumerating all types of complete subtrees and searching each one from the WL subtree requires a high computational cost. Therefore, we propose an efficient algorithm that involves designing a hash function, mapping each complete subtree to an integer value during a post-order depth-first search (DFS) of the WL subtree. Lemma~\ref{lemma:sz} provides an upper bound on the probability that the values of two multivariate polynomials agree under its given conditions. Our idea is to assign a polynomial to each complete subtree. If we can demonstrate that the collision probability between polynomial values can be significantly reduced to a negligible level, then any polynomial value can serve as a hash value.

Given a WL subtree $T$ with height $h$, we assume that there are two variables for each depth except depth $h$ in $T$: {$x_{i}$, $x_{i}'\in \mathbb{Z}/M\mathbb{Z}$ for depth $i\in \lbrace 0, \dots, h-1\rbrace$.} We do not set variables at depth $h$ because the leaves themselves are the simplest complete subtrees, and their node labels already represent the type of simplest complete subtrees. To simplify the expression, we denote $p(v): V \rightarrow \mathbb{Z}/M\mathbb{Z}[x_0, x_0', \dots, x_{h-1}, x_{h-1}']$ as the polynomial of $t(v)$ and $c_i(v)$ as the $i$-th child of $v$. To compute the polynomials dynamically, we traverse the WL subtree in post-order DFS. The polynomial $p(v)$ is defined as follows:
\begin{equation}
p(v) = \left(x'_{{\rm dep}_T(v)} + \ell^{(0)}(v) \right) \prod_{i=1}^n \left(x_{{\rm dep}_T(v)} + p(c_i(v))\right),
\label{eq:hash}
\end{equation}
where $n$ denotes the number of children of $v$. It is important to perform modulo $M$ at each intermediate step of Eq.~(\ref{eq:hash}) to prevent overflow. Figure~$\ref{fig:WLsubtree04R}$ is an illustration of the algorithm. For $v\notin {\rm leaf}(T)$, the polynomial $p(v)$ corresponding to the $t(v)$ has variables $\left \{ x_i, x_i'\mid i \in \{{\rm dep}_T(v), \dots, h-1\} \right \}$, and its degree is $|\mathcal{L}(t(v))|$.

\paragraph{Theoretical guarantees for the tree hash function.} We use multiplication between $x_{{\rm dep}_{T}(v)} + p(c_i)$ for $i \in \lbrace 1, \dots, n \rbrace$ to ensure that the same polynomial can be obtained even if the order of children of $v$ is different. Furthermore, to consider the information of $v$ itself, we multiply by $x_{{\rm dep}_T(v)}'+\ell^{(0)}(v)$, where we use $x_{{\rm dep}_T(v)}'$ instead of $x_{{\rm dep}_T(v)}$ to distinguish $v$ from its children. We also provide theoretical guarantees for our algorithm. Proposition~\ref{pro:01} shows that the polynomial constructed in this way has a one-to-one correspondence with a complete subtree.

\begin{proposition}
Two complete subtrees $t(v_1)$ and $t(v_2)$ are isomorphic if and only if  polynomials $p(v_1)$ and $p(v_2)$ agree. 
\label{pro:01}
\end{proposition}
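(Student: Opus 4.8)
The plan is to prove both directions of the equivalence, treating the polynomials $p(v_1)$ and $p(v_2)$ as formal elements of the polynomial ring $\mathbb{Z}/M\mathbb{Z}[x_0, x_0', \dots, x_{h-1}, x_{h-1}']$ (i.e., comparing them as polynomials, not as evaluations at random points). The easy direction is \emph{isomorphic $\Rightarrow$ equal polynomial}: I would argue by induction on the height of the complete subtree. If $t(v_1) \simeq t(v_2)$, the isomorphism must fix the root depth (since WL subtrees are height-balanced and the depth is an invariant), so both roots sit at the same depth $k$ and share the same original label $\ell^{(0)}$; the isomorphism restricts to a bijection between the children subtrees, each pair of which is isomorphic and — by the inductive hypothesis — has equal polynomial. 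Since the defining formula~(\ref{eq:hash}) multiplies the factors $x_k + p(c_i(v))$ over all children, and multiplication in the polynomial ring is commutative, reordering the children does not change the product; hence $p(v_1) = p(v_2)$.

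The substantive direction is the converse: \emph{equal polynomial $\Rightarrow$ isomorphic}. Here I would again induct on height, but the real work is a \emph{unique factorization} argument. The key observation is that $p(v) = (x'_k + \ell^{(0)}(v)) \prod_{i=1}^n (x_k + p(c_i(v)))$, where $k = {\rm dep}_T(v)$, and each $p(c_i(v))$ is a polynomial involving only the variables $x_j, x_j'$ with $j > k$ — so it is a ``constant'' with respect to $x_k$ and $x_k'$. Thus $p(v)$, viewed as a polynomial in $x_k$ over the ring $R = (\mathbb{Z}/M\mathbb{Z})[x_{k+1}, x_{k+1}', \dots]$, is (up to the unit-like factor $x'_k + \ell^{(0)}(v)$, which lives in a disjoint variable) a product of monic linear factors $x_k + p(c_i(v))$. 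Since $M$ is prime, $\mathbb{Z}/M\mathbb{Z}$ is a field, but $R$ is only an integral domain, not a field; nonetheless $R[x_k]$ is a UFD and the monic linear polynomials $x_k + p(c_i(v))$ are irreducible, so the multiset of roots $\ldblbrace -p(c_i(v)) \rdblbrace$, equivalently $\ldblbrace p(c_i(v)) \rdblbrace$, is uniquely determined by $p(v)$. Separately, evaluating the coefficient structure in $x_k'$ recovers $\ell^{(0)}(v)$. Therefore, from $p(v_1) = p(v_2)$ we first conclude the two roots have a common depth $k$ (the polynomial's variable support pins down $k$ — the smallest index of a variable appearing), the same root label, and equal multisets $\ldblbrace p(c_i(v_1)) \rdblbrace = \ldblbrace p(c_j(v_2)) \rdblbrace$. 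Matching these multisets element by element and applying the inductive hypothesis to each matched pair of children subtrees yields an isomorphism of the children, which assembles into an isomorphism $t(v_1) \simeq t(v_2)$. The base case is height $0$: a leaf's polynomial (really just its label, as no variables are set at depth $h$) determines and is determined by $\ell^{(0)}$, so two leaves match iff they have the same label.

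The main obstacle is making the factorization step fully rigorous over the ring $\mathbb{Z}/M\mathbb{Z}$ with multiple variables: one must be careful that the $x_k + p(c_i(v))$ are genuinely irreducible and pairwise coprime in the appropriate polynomial ring even when two children happen to have \emph{isomorphic} subtrees (so $p(c_i(v)) = p(c_j(v))$ and the factor repeats) — repeated factors are fine for unique factorization as long as we track multiplicities, which is exactly what the multiset notation $\ldblbrace \cdot \rdblbrace$ is for. A second subtlety is the variable-bookkeeping: I need the invariant, provable by the same induction, that $p(v)$ only involves variables $x_j, x_j'$ for $j \geq {\rm dep}_T(v)$ and genuinely involves $x'_{{\rm dep}_T(v)}$ (via the $\ell^{(0)}(v)$ term structure) — this guarantees the depth and root label can be read off the polynomial and that children polynomials behave as constants in $x_k$. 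Once these structural facts are in place, the induction closes cleanly, and the one-to-one correspondence between complete subtrees and polynomials follows.
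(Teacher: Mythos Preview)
Your proposal is correct and follows essentially the same route as the paper: induction on height for both directions, with the converse direction resting on the fact that $\mathbb{Z}/M\mathbb{Z}$ is a field so the ambient polynomial ring is a UFD, whence the monic linear factors $x_k + p(c_i(v))$ (and the distinguished factor $x_k' + \ell^{(0)}(v)$) are uniquely determined up to reordering, recovering the root label and the multiset of children polynomials. If anything, your write-up is more careful than the paper's about variable bookkeeping (depth recovery from the support of $p(v)$) and about handling repeated factors via multisets.
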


If Proposition~\ref{pro:01} holds, then Proposition~\ref{pro:02} gives an upper bound on the collision probability between the integer values of two polynomials.

\begin{proposition}
Let $p(v_1)$ and $p(v_2)$ be polynomials corresponding to two complete subtrees $t(v_1)$ and $t(v_2)$, respectively. Then, the upper bound of the collision probability between integer values of $p(v_1)$ and $p(v_2)$ is $(|\mathcal{L}(t(v_1))|+|\mathcal{L}(t(v_2))|)/M$. 
\label{pro:02}
\end{proposition}

According to Proposition~\ref{pro:02}, choosing a sufficiently large prime number $M$ can reduce the collision probability between the inter values of two polynomials to a low enough level. Typically, we choose $M=10^9+7$ for 64-bit computers. To prevent hash collisions when $N$ is large, we assign $k$ hash values to one complete subtree. Proposition~\ref{pro:bound} provides lower and upper bounds on the probability of at least one collision in generating $N$ hashes, denoted as ${\rm Pr}_{_{\rm Hash}}(N)$.

\begin{proposition}
Let $\xi$ be the maximum number of leaves in all complete subtrees. Then, ${\rm Pr}_{_{\rm Hash}}(N)$ is bounded by
$$1-e^{\frac{-N(N-1)}{2M^k}} \leq {\rm Pr}_{_{\rm Hash}}(N) \leq 1 - \left(1-\left(\frac{2\xi}{M}\right)^k \right)^{\frac{N(N-1)}{2}}.$$
\label{pro:bound}
\end{proposition}

The proofs for the above three propositions can be found in \textbf{Proof of Propositions} section.

\subsection{Wasserstein Distance between Graphs}
We propose a novel graph metric that combines the $L_1$-TED and OT to measure slight differences in structure by reflecting $L_1$-TED at the graph level. First, we consider the set $\mathcal{U}^*$ of all complete subtrees obtained from two given graphs $G$ and $G'$, and then embed them into the same metric space $(\mathbb{N}_0^{|\mathcal{U}^*|}, d_{\phi})$ by computing $\phi(\cdot)$ for all nodes of $G$ and $G'$. Each node of the two graphs is embedded respectively at points $x_1, \dots, x_{|V|}$ and $y_1, \dots, y_{|V'|}$. We define two histograms of $\mathbf{a}$ and $\mathbf{b}$ in the probability simplices $\Delta_{|V|}$ and $\Delta_{|V'|}$, respectively, to serve as weights for each point. We define $\mu({G}) =\sum_{i=1}^{|V|} \mathbf{a}_i {\delta_{x_i}}$ as a discrete measure $\mu({G})$ with weights $\mathbf{a}$ on the locations $x_1,\dots, x_{|V|}$. Similarly, we define $\mu({G}')=\sum_{j=1}^{|V'|}\mathbf{b}_j {\delta_{y_j}}$. Using the above, we can define the Wasserstein distance between $\mu({G})$ and $\mu({G}')$ as follows:
\begin{equation}
	\mathcal{W}\left(\mu({G}), \mu({G}')\right) =\!\! \underset{\mathbf{P}\in \mathbf{U}(\mathbf{a},\mathbf{b})}{\rm min} \! \sum_{i=1}^{|V|} \sum_{j=1}^{|V'|} d_{\phi}\!\left(T(v_i), T(v_j)\right) \!\mathbf{P}_{i,j}.
	\label{eq:wass}
\end{equation}
We call this the {\it Wasserstein Weisfeiler-Lehman Subtree (WWLS) distance}. The computation procedure is summarized in Algorithm~\ref{alg:2}.

\begin{algorithm}[t]
	\caption{Enumerating all types of complete subtrees: ${\rm DFS}_{\rm WL}(\mathcal{G}, h, v, d)$.}
	\label{alg:1}
 	\begin{algorithmic}
	\ENSURE{$\mathcal{P}(v) = \lbrace p(u) \mid u \in \mathcal{V}(T(v))\rbrace$.}
	\REQUIRE{the set of tuples consisting of a node and its adjacent nodes in $G$: $\mathcal{G}= \lbrace (u, \mathcal{N}_G(u)) \mid u\in V\rbrace$; number of iterations $h$; target node $v$; depth of WL subtree $d=0$.}
         \IF{$d > h$}
         	\STATE return.
         \ENDIF
         \STATE Add a new node $v$ to the WL subtree.
         \FOR{each $u$ in $\mathcal{N}_G(v)$} 
         \STATE ${\rm DFS}_{\rm WL}(\mathcal{G}, h, u, d+1)$.
         \ENDFOR
         \IF {$d \neq h$}
         \STATE $p(v)$ $\leftarrow$ Calculate by Eq.~(\ref{eq:hash}).
         \ENDIF
         \STATE Record the hash value $p(v)$ in $\mathcal{P}(v)$.
	\end{algorithmic} 
\end{algorithm}

\begin{algorithm}[t]
	\caption{Computing the WWLS distance.}
	\label{alg:2}
	\begin{algorithmic}
	\ENSURE{$\mathcal{W}(\mu(G), \mu(G'))$.}
	\REQUIRE{two graphs ${G}({V}, {E})$ and ${G}'({V}', {E}')$.}
    	\STATE $U = \bigcup_{v\in V} \mathcal{P}(v)$ $\leftarrow$ Calculate by Algorithm~$\ref{alg:1}$.
    	\STATE $U' = \bigcup_{v'\in V'} \mathcal{P}(v')$ $\leftarrow$ Calculate by Algorithm~$\ref{alg:1}$.
    	\STATE $\mathcal{U} = U \cup U'$.
        \STATE $\mathbf{C}\leftarrow d_{\phi}(\phi(T(v_i), \mathcal{U}), \phi(T(v_j), \mathcal{U}))$ for all combinations of $i$ and $j$.
        \STATE $\mathcal{W}(\mu(G), \mu(G'))$ $\leftarrow $ Calculate by Eq.~(\ref{eq:wass}).
        \STATE return $\mathcal{W}(\mu(G), \mu(G'))$.
	\end{algorithmic} 
\end{algorithm}

\section{Time Complexity Analysis}

\begin{table}
    	\caption{Results of the Runtime Experiments. The time required to compute the distance between all pairs of graphs for each dataset is shown below (in seconds). For MUTAG, PTC-MR, and ENZYMES, $h = 2$; for IMDB-B, $h=1$. The reason for setting such $h$ is explained in Experiment 3. The details of the dataset are summarized in Tables~\ref{table:gc0} and \ref{table:gc1}.}
	\begin{center}
	\begin{tabular}{ |l|r|r| } 
	\hline
    	& WWL & WWLS \\
	\hline
	MUTAG & 3.46 & 4.42 \\ \hline
	PTC-MR & 10.49 & 12.25 \\ \hline
	ENZYMES & 65.68 & 108.22 \\ \hline
	IMDB-B & 106.20 & 129.00 \\ 
    	\hline
	\end{tabular}
	\end{center}
	\label{table:runtime}
\end{table}

We summarize the above computation procedures in Algorithms~\ref{alg:1} and \ref{alg:2}. First, we analyze Algorithm~\ref{alg:1}. The construction of the WL subtree and the computation of the hash value can be implemented in a single DFS framework. For each WL subtree, Eq.~(\ref{eq:hash}) is executed $|\mathcal{V}(T) \backslash \mathcal{L}(T)|$ times. Therefore, the overall time complexity is $\mathcal{O}(|\mathcal{V}(T)|+|\mathcal{E}(T)|+n|\mathcal{V}(T) \backslash \mathcal{L}(T)|)$. Assuming that the average degree of the graph is $\bar{d}$, we  further consider the WL subtree to be an approximately perfect $\bar{d}$-ary tree. Then, $n=\bar{d}, |\mathcal{V}(T)| = \frac{1-\bar{d}^{(h+1)}}{1-\bar{d}}$, $|\mathcal{L}(T)|=\bar{d}^h$ and $|\mathcal{E}(T)| = |\mathcal{V}(T)|-1$. Finally, it takes $\mathcal{O}\left(\left(3\frac{\bar{d}^{h+1}-\bar{d}}{\bar{d}-1}+1\right)|V|\right)$ for one graph. This is {\it linear time complexity} with respect to the size of the graph and exponential time complexity with respect to $h$. Next, we analyze Algorithm~\ref{alg:2}. For convenience, assume that $|V'|= \mathcal{O}(|V|)$, $h$ is a constant, and $\tau$ is the average number of types of complete subtrees present in the two WL subtrees. We run Algorithm~\ref{alg:1} twice and then compute $\mathbf{C}$ using pairwise $L_1$-TED. Considering that the computation of the $L_1$ norm requires $\mathcal{O}(\tau)$, it takes $\mathcal{O}\left(2\left(3\frac{\bar{d}^{h+1}-\bar{d}}{\bar{d}-1}+1\right)|V|+\tau |V|^2\right)$ for these computations. In addition, computing the Wasserstein distance takes approximately quadratic time complexity. Therefore, the overall time complexity is $\mathcal{O}\left(2\left(3\frac{\bar{d}^{h+1}-\bar{d}}{\bar{d}-1}+1\right)|V|+ (\tau + 1 )|V|^2\right)$. To verify its real runtime efficiency, we conduct runtime experiments and show the results in Table~\ref{table:runtime}. The heavy processing parts of WWL and WWLS are written in C++, and we run programs on macOS Monterey, Intel(R) Core(TM) i5-7360U CPU @ 2.30GHz. As seen in Table~\ref{table:runtime}, although WWLS is slower than WWL, the difference is within acceptable limits.

\section{Experiments}
We conduct two types of experiments: metric validation and graph classification experiments. In the metric validation experiments, we demonstrate the effectiveness of the WWLS as a metric. In the graph classification experiments, we confirm the adaptability of the metric to graph classification, which represents one of its diverse applications. All experiments are conducted in the same environment as the runtime experiments. \textbf{Source code:} \underline{\url{https://github.com/Fzx-oss/WWLS}}.

\subsection{Metric Validation Experiments}
\paragraph{Experiment 1: Metric validation experiments.}
First, we evaluate WWLS on metric validation experiments. A good metric should be able to measure slight differences between two graphs. This experiment verifies this point. We randomly generate two graphs with 50 nodes and keep increasing the {\it edge noise} of one of them. We adopt two methods to add edge noise: one replaces the edge $a$--$b$ with $a$--$c$, and the other adds a new edge. We also prepare cycle and grid graphs as synthetic datasets. As baselines, we use WWL distance, Gromov-Wasserstein (GW) distance based on the shortest path length~\cite{peyre2016gromov}, and the Frobenius norm of the difference between Laplacian matrices. For the Laplacian matrix, we prepare two matrices: one aligned and one intentionally disordered with a substitution matrix. The ideal but impractical baseline is the one using an aligned Laplacian matrix. The graph alignment problem is another important issue in graph machine learning, allowing easy comparison of graph structures. We set the number of iterations as $h = 2$ for WWL and WWLS.

We observe changes in distance values with increasing noise and summarize the results in Figure~\ref{fig:distacne}. The Frobenius norm of the difference between aligned Laplacian matrices shows a smooth curve with monotonic growth, as shown by the green dashed line. Next, we examine the control group. The unaligned Laplacian matrices fail to measure the graph structure. WWLS succeeds in drawing a smooth curve close to the ideal case, whereas WWL and GW show a steep curve in which the values increase rapidly, even with small noise. Fluctuations in the distance values of GW are particularly noticeable because the addition of noise could easily change the length of the shortest path. We infer that both WWL and shortest-path-length-based GW are biased toward comparing the consistency of graphs.

\begin{figure*}
	\centering
	\includegraphics[width=\linewidth]{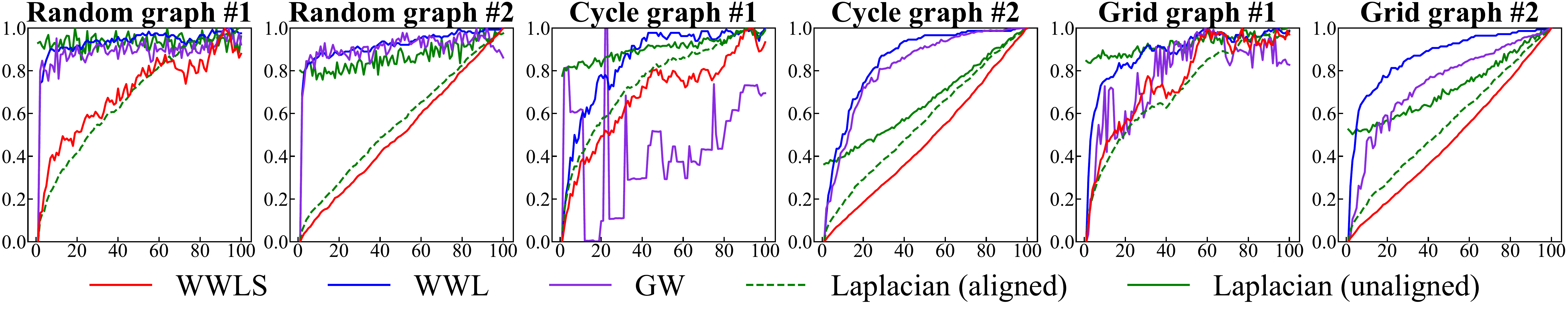}
	\caption{Results of Experiment 1. Change of the distance values with the increase of the edge noise. The vertical axis shows the distance value. The horizontal axis shows the number of times noise is added. {\#1} refers to the noise type that replaces edges. {\#2} refers to the noise type that adds edges. For ease of comparison, distance values are normalized by the maximum value.}
	\label{fig:distacne}
\end{figure*}

\begin{table*}
	\caption{Results of Experiment 2. Results are reported as mean $\pm$ standard deviation of ten repetitions. The best result for each dataset is marked in {bold}. ``--'' refers to results not reported in the original paper.}
	\renewrobustcmd{\bfseries}{\fontseries{b}\selectfont}
	\renewrobustcmd{\boldmath}{}
	\begin{center}
    	\begin{tabular}{lcccccccc} \toprule
              & MUTAG               & PTC-MR              & COX2                & ENZYMES              & PROTEINS            \\ 
		Graphs                & 188   & 344     & 467    & 600   & 1113  \\ 
		Classes               & 2       & 2         & 2        & 6     & 2     \\ 
		Avg. Nodes         & 17.93 & 14.29 & 41.22 & 32.63 & 39.06 \\ 
		Avg. Degree        & 1.10  & 1.03    & 1.05   & 1.90  & 1.86  \\ 
        \midrule \midrule
        WL            & 85.75$\pm$1.96       & 61.21$\pm$2.28       & 79.67$\pm$1.32         & 54.27$\pm$0.94       & 73.06$\pm$0.47      \\
        WL-OA      & 86.10$\pm$1.95       & 63.60$\pm$1.50       & 81.08$\pm$0.89         & 58.88$\pm$0.85       & 73.50$\pm$0.87       \\
        WL-PM      & 87.77$\pm$0.81       & 61.41$\pm$0.81       &  --                               & 55.55$\pm$0.56       & --                              \\
        WWL         & 87.27$\pm$1.50       & 66.31$\pm$1.21        & 78.29$\pm$0.47        & 59.13$\pm$0.80       & 74.28$\pm$0.56       \\ 
    	GIN           & 84.51$\pm$1.56       & 56.20$\pm$2.18        & \bf{82.08$\pm$0.93}  & 39.35$\pm$1.53       & 71.93$\pm$0.63      \\ \midrule
    	WWLS      & \bf{88.30$\pm$1.23} & \bf{67.32$\pm$1.09} & 81.58$\pm$0.91        & \bf{63.35$\pm$1.14}  & \bf{75.35$\pm$0.74}  \\ \bottomrule
	\end{tabular}
	\end{center}
	\label{table:gc0}
\end{table*}

\begin{table}
	\caption{Results of Experiment 2. Reported in the same manner as in Table~\ref{table:gc0}. } 
	\renewrobustcmd{\bfseries}{\fontseries{b}\selectfont}
	\renewrobustcmd{\boldmath}{}
	\begin{center}
	\begin{tabular}{lccccc} \toprule 
          			& NCI1                 & BZR      &  IMDB-B           & IMDB-M             & COLLAB              \\ 
		Graphs    & 4110  & 405   & 1000   & 1500  & 5000\\
		Classes   & 2     & 2   & 2      & 3     & 3\\
		Avg. Nodes  & 29.87 & 35.75 & 19.77  & 13.00 & 74.49\\
		Avg. Degree & 1.08  & 1.07 & 4.88   & 5.07  & 32.99\\
        \midrule \midrule
        WL                & 85.76$\pm$0.22        & 87.16$\pm$0.97  &  71.15$\pm$0.47        & 50.25$\pm$0.72         & 79.02$\pm$1.77      \\
        WL-OA          & 85.95$\pm$0.23         & 87.43$\pm$0.81 &  74.01$\pm$0.66        & 49.95$\pm$0.46         & 80.18$\pm$0.25      \\
        WL-PM   	    & \bf{86.40$\pm$0.20}  & --       & -- & --  & --\\
        WWL            & 85.75$\pm$0.25       & 84.42$\pm$2.03  &  74.37$\pm$0.83        & --                                 & --                  \\
    	GIN              & 77.86$\pm$0.49       & 83.86$\pm$0.95   & 72.52$\pm$0.95         & 49.41$\pm$1.16         & 78.32$\pm$0.32      \\ \midrule
        WWLS         & 86.06$\pm$0.09       & \bf{88.02$\pm$0.61}  & \bf{75.08$\pm$0.31}   & \bf{51.61$\pm$0.62}  & \bf{82.81$\pm$0.16} \\ \bottomrule
	\end{tabular}
	\end{center}
	\label{table:gc1}
\end{table}

\subsection{Graph Classification Experiments}

\paragraph{Conversion from metric to graph kernel.}
In the context of the graph classification task, we adopt the approach of previous work~\cite{HUANG2021108281} and introduce the indefinite kernel with the following formula:
\begin{equation}
	K(G, G') = \exp \left({-\gamma \mathcal{W}\left(\mu(G), \mu(G')\right)}\right),
	\label{eq:kernel}
\end{equation}
where $\gamma$ is a parameter. It is not guaranteed that Eq.~(\ref{eq:kernel}) is symmetric positive semi-definite. Therefore, we adopt the Krein SVM~\citep{loosli2015learning}, which can solve SVM with kernels that are usually troublesome, such as large numbers of negative eigenvalues.

\paragraph{Experiment 2: General graph classification experiments.} \textbf{(i) Datasets.} We use TUD benchmark datasets, specifically selecting ten frequently used datasets, which can be grouped into two categories: (1) Bioinformatics datasets, including MUTAG, PTC-MR, COX2, ENZYMES, PROTEINS, NCI1, and BZR; and (2) Social network datasets, including IMDB-B, IMDB-M, and COLLAB. \textbf{(ii) Evaluation methods.} We employ a commonly used evaluation method for graph kernels~\citep{Morris+2020}, randomly splitting the data into a training set ($90\%$) and a test set ($10\%$), with a portion of the training set reserved for validation to tune the parameters. We repeat this evaluation ten times and report the average accuracy and standard deviation. \textbf{(iii) Baselines.} We compare our approach with five baselines: WL kernel, WL Optimal Assignment kernel (WL-OA)~\citep{kriege2016valid}, WL Pyramid Match kernel~(WL-PM)~\citep{nikolentzos2017matching}, WWL kernel, and Graph Isomorphism Network (GIN)~\citep{xu2018how}. All models are related methods to the WL test. For the WL-PM and WWL, we cite results from original papers. For the remaining graph kernels, we cite results from the survey paper~\cite{9307216}, and our evaluation method is consistent with theirs. Since the original paper of GIN does not set up a validation set, we use the same conditions as \cite{Morris+2020} to make a more fair comparison. The parameters of the WWLS are set as follows: we adjust the iteration number $h$ within $\{2, 3\}$ for the bioinformatics datasets and $h=1$ for the social datasets due to their large node degrees; we adjust the parameter $\gamma$ of Eq.~(\ref{eq:kernel}) within $\{ 10^{-4}, 10^{-3}, \dots, 10^{-1}\}$; and we adjust the regularization parameter $C$ of SVM within $\{10^{-3}, 10^{-2}, \dots, 10^3\}$. 

We present the results in Tables~\ref{table:gc0} and \ref{table:gc1}. As results show, the WWLS outperforms the baselines on all datasets except COX2 and NCI1 but is in the second position. The second experiment demonstrates the effectiveness of the WWLS on the graph classification tasks.

\begin{figure*}
	\centering
	\includegraphics[width=\linewidth]{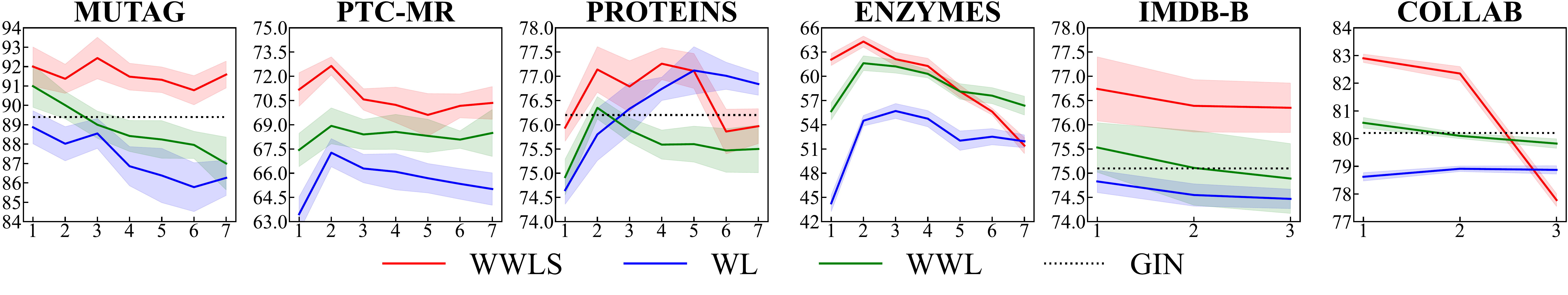}
	\caption{Results of Experiment 3. Classification accuracy with the number of iterations. The horizontal axis shows the number of iterations. The vertical axis shows the accuracy ($\%$). Shaded areas represent standard deviations (mean $\pm$ standard deviation). Datasets not used in the original paper of GIN are not shown with dashed lines.}
	\label{fig:potential}
\end{figure*}

\paragraph{Experiment 3: Maximum classification performance of models.}
Graph classification experiments based on 10 times 10-fold cross-validation typically have two ways of recording results: the {\it mean $\pm$ standard deviation of 10 repetitions}, as employed in the experiment above; and {\it mean $\pm$ standard deviation of 100 runs} (10 repetitions with 10 folds). According to \citep{Morris+2020}, the former usually has a low standard deviation, whereas the latter has a high standard deviation. This is due to significant variation in the graph datasets and, therefore, the parameters selected in the validation set that do not work in the test. Thus, the performance of models cannot be sufficiently differentiated in the second experiment. The third experiment evaluates the maximum performance of the WL, WWL, and WWLS. We perform 10 times 10-fold experiments in each iteration and search all parameters of models and SVM. The one with the highest accuracy is regarded as the maximum performance in that iteration. Since this evaluation is very similar to the one adopted by GIN, we cite the best results of the original paper as an additional baseline. We enumerate $h$ with $\{1, 2, \dots, 7\}$, and for $\gamma$ of the WWL and WWLS, we adjust them within $\{10^{-3}, 10^{-2}, \dots, 1\}$. For $C$ of SVM, we adjust it within $\{10^{-3}, 10^{-2}, \dots, 10^3\}$. 

We summarize the results in Figure~\ref{fig:potential}, which provide a deeper understanding and insight into the performance differences among the three methods. We observe that the WWLS significantly outperforms the WL, WWL, and GIN when the iteration is small, such as $h=2$ or 3 for the bioinformatics datasets and $h=1$ for the social network datasets. As $h$ increases, all kernel methods exhibit an overall decreasing trend in accuracy, indicating the importance of local structure. In terms of the trend of decreasing accuracy, WWLS shows the most significant decrease, which can be attributed to a weakening of the bounds in Eq.~(\ref{eq:bound}) as $h$ increases. For these reasons, we actually set $h$ to a small number, such as 1--3.

\section{Proof of Propositions}

\subsection{Proof of Proposition 1}

\begin{lemma}
Let ${T}(v_1)$ and ${T}(v_2)$ be general finite rooted trees with roots $v_1$ and $v_2$, respectively. Then ${T}(v_1)$ is isomorphic to ${T}(v_2)$ if and only if there exists a bijection $f: \mathcal{C}(v_1) \rightarrow \mathcal{C}(v_2)$ such that for all $v\in \mathcal{C}(v_1)$ the subtree rooted at $v$ is isomorphic to the subtree rooted at $f(v)$ and $v_1 \simeq v_2$.
\label{lemma:0-1}
\end{lemma}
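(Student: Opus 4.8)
The plan is to prove the equivalence directly from the definition of rooted-tree isomorphism, with no induction required: the statement is precisely the one-step recursive characterisation, so it suffices to glue, respectively restrict, maps at the root. Throughout I would fix the convention that a rooted-tree isomorphism $g\colon {T}(v_1)\to {T}(v_2)$ is a bijection $\mathcal{V}({T}(v_1))\to\mathcal{V}({T}(v_2))$ that preserves the edge relation and the node labels $\ell^{(0)}(\cdot)$ and sends root to root; equivalently it commutes with the parent map, i.e.\ $g({\rm parent}(u))={\rm parent}(g(u))$ for every non-root $u$. I would also record at the outset that $\mathcal{V}({T}(c))$, the subtree rooted at a node $c$, is exactly the descendant set of $c$ (all $u$ from which iterating ${\rm parent}$ reaches $c$), a property intrinsic to the parent relation.

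For the ``if'' direction I would assume $v_1\simeq v_2$ and a bijection $f\colon\mathcal{C}(v_1)\to\mathcal{C}(v_2)$ together with rooted-tree isomorphisms $g_c\colon {T}(c)\to {T}(f(c))$ for each $c\in\mathcal{C}(v_1)$, and build $g$ by setting $g(v_1)=v_2$ and $g|_{\mathcal{V}({T}(c))}=g_c$. This is well defined and bijective because $\{v_1\}$ together with the sets $\mathcal{V}({T}(c))$ partitions $\mathcal{V}({T}(v_1))$, $f$ is a bijection on children, and each $g_c$ is a bijection; labels are preserved block by block using $v_1\simeq v_2$ and the $g_c$. Edge preservation is then a short case check: an edge of ${T}(v_1)$ is either $\{v_1,c\}$ for some child $c$ — sent to the edge $\{v_2,f(c)\}$ of ${T}(v_2)$ — or it lies inside some ${T}(c)$, where $g$ agrees with the isomorphism $g_c$.

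For the ``only if'' direction I would start from a rooted-tree isomorphism $g\colon {T}(v_1)\to {T}(v_2)$. Since $g$ maps root to root, $g(v_1)=v_2$ and hence $v_1\simeq v_2$. Because $g$ preserves adjacency and the parent map, it carries $\{u : {\rm parent}(u)=v_1\}=\mathcal{C}(v_1)$ bijectively onto $\mathcal{C}(v_2)$, so $f:=g|_{\mathcal{C}(v_1)}$ is the required bijection. Finally, using the intrinsic description of $\mathcal{V}({T}(c))$ as the descendant set of $c$, $g$ maps $\mathcal{V}({T}(c))$ bijectively onto $\mathcal{V}({T}(f(c)))$; the restriction $g|_{{T}(c)}$ inherits edge- and label-preservation and sends root $c$ to root $f(c)$, so it is an isomorphism ${T}(c)\to {T}(f(c))$, as needed.

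The only genuinely delicate point is this last restriction step: one must know that an isomorphism of the ambient rooted trees restricts to an isomorphism of each child subtree, and this rests on the observation that ``the subtree rooted at $c$'' is not extra data but is determined by the parent relation (the descendant set of $c$), hence transported faithfully by $g$. Everything else is the routine verification that a glued, respectively restricted, map is a bijection preserving edges and labels.
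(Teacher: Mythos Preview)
Your argument is correct, but there is nothing to compare it against: in the paper this lemma is simply recorded as a known auxiliary fact inside the appendix subsection ``Proof of Proposition~1'' and is not given its own proof. The paper states Lemma~\ref{lemma:0-1} (and Lemma~\ref{lemma:0-2}, cited from a textbook) and then immediately uses them inside the induction that establishes Proposition~\ref{pro:01}. Your write-up therefore supplies the missing elementary verification. The glue/restrict argument you give is the standard one and is sound; in particular your identification of the only delicate point --- that the subtree rooted at a child is intrinsically determined by the parent relation, so an ambient isomorphism automatically restricts to subtree isomorphisms --- is exactly what makes the ``only if'' direction go through.
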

\noindent
Lemma \ref{lemma:0-1} can be obtained directly from the definition of tree isomorphism~\cite{buss1997alogtime}.

\begin{lemma}[Expansion of Gauss' Theorem~\cite{Bosch2018}]
The polynomial ring $\mathbbmsl{F}[x_1, \dots, x_n]$ is a unique factorization domain (UFD) if and only if $\mathbbmsl{F}$ is a UFD.
\label{lemma:0-2}
\end{lemma}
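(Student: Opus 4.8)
The plan is to prove the nontrivial implication — if $\mathbb{F}$ is a UFD then $\mathbb{F}[x_1,\dots,x_n]$ is a UFD — by induction on $n$, reducing everything to the one-variable case; the reverse implication is elementary and I treat it last. For the inductive step, write $\mathbb{F}[x_1,\dots,x_n]=R[x_n]$ where $R:=\mathbb{F}[x_1,\dots,x_{n-1}]$ is a UFD by the induction hypothesis, so it suffices to prove the classical statement that $R[x]$ is a UFD whenever $R$ is (Gauss's theorem), the base case $n=1$ being exactly this with $R=\mathbb{F}$.

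First I would develop the content machinery. Because $R$ is a UFD, every nonzero $f\in R[x]$ has a content $c(f)\in R$, the gcd of its coefficients, well defined up to a unit; $f$ is \emph{primitive} when $c(f)$ is a unit, and each nonzero $f$ factors as $f=c(f)f_{0}$ with $f_{0}$ primitive. The key lemma is Gauss's lemma: a product of primitive polynomials is primitive. I would prove it by contradiction: if an irreducible (hence prime, since $R$ is a UFD) element $p\in R$ divided every coefficient of $f_{0}g_{0}$, then reduction modulo $p$ in the integral domain $(R/pR)[x]$ would give $\bar f_{0}\,\bar g_{0}=0$, forcing $\bar f_{0}=0$ or $\bar g_{0}=0$, contradicting primitivity of $f_0$ or $g_0$.

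Next I would pass to the fraction field $K:=\mathrm{Frac}(R)$, so that $K[x]$ is a PID and hence a UFD. The irreducibles of $R[x]$ then split into two types: irreducible constants (exactly the irreducibles of $R$) and primitive polynomials of positive degree that stay irreducible in $K[x]$. \emph{Existence} of a factorization of a nonzero non-unit $f$: factor $c(f)$ into irreducibles in $R$, factor $f_{0}$ into irreducibles in $K[x]$, then clear denominators and absorb the resulting constant factors into the contents, using Gauss's lemma to keep the polynomial factors primitive. \emph{Uniqueness}: the constant parts of two factorizations of $f$ match by uniqueness in $R$; for the positive-degree parts, Gauss's lemma shows that two primitive polynomials that are associate in $K[x]$ are already associate in $R[x]$, so uniqueness in $K[x]$ descends to $R[x]$. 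This closes the induction.

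For the converse, suppose $S:=\mathbb{F}[x_1,\dots,x_n]$ is a UFD. Then $S$ is an integral domain, hence so is its subring $\mathbb{F}$, and the units of $S$ are exactly the nonzero constants, i.e.\ the units of $\mathbb{F}$. A nonzero non-unit $a\in\mathbb{F}$ has a factorization into irreducibles of $S$; comparing degrees forces every factor to be a nonzero constant, and a constant is irreducible in $S$ if and only if it is irreducible in $\mathbb{F}$ (divisibility and units among constants agree in the two rings). Uniqueness of the $S$-factorization then gives uniqueness of the $\mathbb{F}$-factorization, so $\mathbb{F}$ is a UFD. I expect the main obstacle to lie not in Gauss's lemma, which is short, but in the careful bookkeeping with contents and associates in the existence and uniqueness arguments for $R[x]$ — making precise the ``absorb constants into the content'' step and the passage between $R[x]$-associates and $K[x]$-associates.
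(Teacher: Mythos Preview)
The paper does not actually prove this lemma; it is quoted as a standard result from the cited algebra text and used as a black box inside the proof of Proposition~1. Your proposal therefore goes well beyond what the paper does: you supply the full classical argument (induction on $n$ reducing to Gauss's theorem that $R$ a UFD implies $R[x]$ a UFD, proved via contents, primitive parts, Gauss's lemma, and passage to $K=\mathrm{Frac}(R)$). This is the standard textbook route and is correct in outline; there is nothing to compare on the paper's side beyond the citation.

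One small slip in your converse direction: you write that ``the units of $S$ are exactly the nonzero constants, i.e.\ the units of $\mathbb{F}$.'' Over an integral domain $\mathbb{F}$, the units of $S=\mathbb{F}[x_1,\dots,x_n]$ are exactly the units of $\mathbb{F}$, not all nonzero constants; equating the two would already force $\mathbb{F}$ to be a field. With that phrasing corrected, your degree-comparison argument (factors of a constant $a$ in $S$ must themselves be constants, irreducibility and associates among constants agree in $S$ and in $\mathbb{F}$) goes through as written.
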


\noindent We prove Proposition 1 using Lemmas~\ref{lemma:0-1} and~\ref{lemma:0-2}.

\begin{proof}
We use mathematical induction to prove two directions. For both proofs, we assume that there are two complete subtrees $t(v_1)$ and $t(v_2)$ with height $h_1$ and $h_2$, respectively. We also define notations for the proof. Let $l_i(t(v_1)) \in \mathcal{L}(t(v_1))$ and $l_j(t(v_2)) \in \mathcal{L}(t(v_2))$ are the $i$-th and $j$-th leaves of $t(v_1)$ and $t(v_2)$, respectively.

{\it Necessity} [$t(v_1) \simeq t(v_2) \rightarrow p(v_1) = p(v_2)$]. Since $t(v_1) \simeq t(v_2)$, the heights of the two trees are equal. First, we consider the case of nodes with a height of zero. Assuming that $l_i(v_1)$ and $l_j(v_2)$ have a correspondence, then $p(l_i(v_1)) = p(l_j(v_2))$ is true. For nodes with a height $h_1-1$, let us assume that $t(c_i(v_1))) \simeq t(c_j(v_2)) \rightarrow p(c_i(v_1)) = p(c_j(v_2))$ holds. By Lemma~\ref{lemma:0-1}, we can obtain $p(c_i(v_1)) = p(f(c_i(v_1)))$. Next, we consider when the height is $h_1$.
\begin{eqnarray*}
&& p(v_1) \\
=\!\!\!\!\!\!\!\!\!&& (x'_h + \ell^{(0)}(v_1)) (x_h + p(c_1(v_1))) \cdots (x_h + p(c_m(v_1)))\\
=\!\!\!\!\!\!\!\!\!&& (x'_h + \ell^{(0)}(v_1)) (x_h + p(f(c_1(v_1)))) \cdots (x_h + p(f(c_m(v_1)))) \\
=\!\!\!\!\!\!\!\!\!&& (x'_h + \ell^{(0)}(v_1)) (x_h + p(c_1(v_2))) \cdots (x_h + p(c_m(v_2)))\\
=\!\!\!\!\!\!\!\!&& (x'_h + \ell^{(0)}(v_2)) (x_h + p(c_1(v_2))) \cdots (x_h + p(c_m(v_2)))  \\
=\!\!\!\!\!\!\!\!&& p(v_2).
\end{eqnarray*}
We have demonstrated that if the statement holds true for the case where the height is $h_1-1$, then it necessarily holds true for the subsequent case where the height is $h_1$.

{\it Sufficiency} [$p(v_1) = p(v_2) \rightarrow t(v_1)\simeq t(v_2)$]. If $t(v_1)$ and $t(v_2)$ have different heights, then $p(v_1) \neq p(v_2)$ because they have different variables. Therefore, complete subtrees have the same height if their polynomials agree. For nodes with a height of zero, it is evident that $p(l_i(t(v_1)))=p(l_j(t(v_2)) \rightarrow l_i(t(v_1)) \simeq l_j(t(v_2))$. Moving on to nodes with a height $h_1-1$, let us assume that $p(c_i(v_1)))=p(c_j(v_2)) \rightarrow t(c_i(v_1)) \simeq t(c_j(v_2))$. Now, let us consider the case that the height is $h_1$. We assume that $v_1$ and $v_2$ have $m$ and $n$ children, respectively. Since $p(v_1)=p(v_2)$, we can obtain
\begin{eqnarray*}
&& p(v_1) \\
=\!\!\!\!\!\!\!\!&&  (x'_h + \ell^{(0)}(v_1)) (x_h + p(c_1(v_1))) \cdots (x_h + p(c_m(v_1))) \\
=\!\!\!\!\!\!\!\!&& (x'_h + \ell^{(0)}(v_2)) (x_h + p(c_1(v_2))) \cdots (x_h + p(c_n(v_2))) \\
=\!\!\!\!\!\!\!\!&& p(v_2).
\end{eqnarray*}
The polynomials $p(v_1)$ and $p(v_2)$ belong to the ring $(\mathbb{Z}/M\mathbb{Z}[x_0, $ $x_0',\dots, x_{h-1}, x_{h-1}'])[x_h, x_h']$. Since $\mathbb{Z}/M\mathbb{Z}$ is a field, it is a unique factorization domain (UFD) by definition. By applying Lemma~\ref{lemma:0-2}, we can also conclude that $\mathbb{Z}/M\mathbb{Z}[x_0, x_0', \dots, x_{h-1}, x_{h-1}']$ is a UFD, which implies that $(\mathbb{Z}/M\mathbb{Z}[x_0, x_0',\dots, $\\$x_{h-1}, x_{h-1}'])[x_h, x_h']$ is also a UFD. Therefore, $p(v_1)$ and $p(v_2)$ have the same zeros, namely $-\ell^{(0)}(v_1),$\\ $-p(c_1(v_1)), \dots, -p(c_m(v_1))$ and $-\ell^{(0)}(v_2),$ $-p(c_1(v_2)) \dots, -p(c_n(v_2))$, respectively. There is a one-to-one correspondence, such that $\ell^{(0)}(v_1) = \ell^{(0)}(v_2), p(c_i(v_1)) = p(c_j(v_2))$, although the exact indices $i$ and $j$ are unknown. Thus, we can conclude that $v_1 = v_2$. By our previous assumption for nodes with height $h_1 - 1$, we have established the existence of a bijection $f: \mathcal{C}(v_1) \rightarrow \mathcal{C}(v_2)$. Furthermore, using Lemma~\ref{lemma:0-1}, we can obtain $t(v_1) \simeq t(v_2)$ and prove that it still holds for height $h_1$.
\end{proof}

\subsection{Proof of Proposition 2}

\begin{lemma}[Schwartz-Zippel-Variant~\citep{jakubowski2007software}]
Let $P\in \mathbb{Z}[x_1, \dots, x_n]$ be a (non-zero) polynomial of total degree $d > 0$ defined over the integers $\mathbb{Z}$. Let $P$ be the set of all prime numbers. Let $r_1, \dots, r_n$ be chosen at random from $\mathbb{Z}$, and $q$ is a prime number. Then ${\rm Pr}[P(r_1, \dots, r_n)$ mod $q] \leq d/q$.
\label{lemma:sz}
\end{lemma}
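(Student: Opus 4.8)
The plan is to reduce the statement to the classical Schwartz--Zippel lemma over the finite field $\mathbb{Z}/q\mathbb{Z}$, which I would then establish from scratch by induction on the number of variables $n$. First I would pass from the integer polynomial $P$ to its reduction $\bar{P} \in (\mathbb{Z}/q\mathbb{Z})[x_1,\dots,x_n]$ obtained by taking each coefficient modulo the prime $q$. Since reducing coefficients can only lower the degree, $\deg \bar{P} \leq d$. Because $q$ is prime, $\mathbb{Z}/q\mathbb{Z}$ is a field, and selecting each $r_i$ uniformly and then reducing modulo $q$ induces the uniform distribution on the $q$ residue classes; moreover $P(r_1,\dots,r_n) \equiv 0 \pmod{q}$ holds exactly when $\bar{P}(\bar{r}_1,\dots,\bar{r}_n)=0$ in $\mathbb{Z}/q\mathbb{Z}$. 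Thus the target probability equals the probability that a uniformly random point of $(\mathbb{Z}/q\mathbb{Z})^n$ is a zero of $\bar{P}$, and it suffices to bound this by $d/q$.

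The core is then the field version: for a nonzero polynomial $\bar{P}$ of total degree at most $d$ over $\mathbb{Z}/q\mathbb{Z}$, a uniformly random point is a root with probability at most $d/q$. I would prove this by induction on $n$. The base case $n=1$ follows because a nonzero univariate polynomial of degree at most $d$ over a field has at most $d$ roots, so at most $d$ of the $q$ choices for $r_1$ are roots. For the inductive step I would write $\bar{P} = \sum_{i=0}^{k} x_n^i\, Q_i(x_1,\dots,x_{n-1})$, where $k$ is the largest exponent of $x_n$ occurring with a nonzero coefficient, so that $Q_k \neq 0$ and $\deg Q_k \leq d-k$. Conditioning on the draw of $(r_1,\dots,r_{n-1})$, I split into two events: either $Q_k(r_1,\dots,r_{n-1})=0$, which by the inductive hypothesis has probability at most $(d-k)/q$; or $Q_k(r_1,\dots,r_{n-1})\neq 0$, in which case substituting these values turns $\bar{P}$ into a nonzero univariate polynomial in $x_n$ of degree exactly $k$, so by the base case the conditional probability that $r_n$ is a root is at most $k/q$. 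A union bound over these two events yields the total bound $(d-k)/q + k/q = d/q$.

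The one genuine subtlety, and the main obstacle, is the degenerate possibility that every coefficient of $P$ is divisible by $q$, so that $\bar{P}$ is the zero polynomial; then $P(\mathbf{r})\equiv 0 \pmod{q}$ for all $\mathbf{r}$ and the claimed bound $d/q$ can fail whenever $d < q$. I would handle this by observing that the lemma is only invoked (in Proposition~\ref{pro:02}) for a polynomial that is \emph{not} identically zero modulo $q$ --- indeed the construction of $p(v)$ in Eq.~(\ref{eq:hash}) keeps the exponents and coefficients controlled so that reduction modulo $M$ does not annihilate it --- and so I would carry the nondegeneracy hypothesis $\bar{P} \not\equiv 0$ explicitly through the induction, where it is exactly what guarantees $Q_k \neq 0$ in the splitting step. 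The remaining bookkeeping (uniformity of the induced residues and the additivity of degrees under the $x_n$-expansion) is routine once this nondegeneracy is in place.
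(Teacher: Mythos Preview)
The paper does not prove this lemma at all: it is stated with a citation to \cite{jakubowski2007software} and then invoked as a black box in the proof of Proposition~\ref{pro:02}. So there is no ``paper's own proof'' to compare against; your proposal supplies what the paper omits.

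Your argument is the standard one and is correct under the nondegeneracy hypothesis you isolate. Reducing coefficients modulo the prime $q$ lands you in the field $\mathbb{Z}/q\mathbb{Z}$, and the induction on $n$ via the $x_n$-expansion with leading coefficient $Q_k$ is exactly the classical Schwartz--Zippel proof. You are also right that the lemma as literally stated is false without the assumption $\bar P \not\equiv 0$: for instance $P = q x_1$ has degree $1$ but vanishes modulo $q$ identically, so the bound $1/q$ fails. In the paper's application this degeneracy is excluded not by the shape of Eq.~(\ref{eq:hash}) per se but by Proposition~\ref{pro:01}, which guarantees that $p(v_1)-p(v_2)$ is a nonzero element of $(\mathbb{Z}/M\mathbb{Z})[x_0,x_0',\dots,x_{h-1},x_{h-1}']$ whenever the two complete subtrees are non-isomorphic; since the coefficients already live in $\mathbb{Z}/M\mathbb{Z}$ there is no further reduction that could annihilate it. One more wrinkle worth flagging: the phrase ``selected randomly from $\mathbb{Z}$'' in the lemma is ill-posed (there is no uniform probability measure on $\mathbb{Z}$), and your interpretation---that the induced residues modulo $q$ are uniform---is the only sensible reading and the one implicitly used downstream.
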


\noindent The proof of Proposition 2 is given below using Lemma~\ref{lemma:sz}.

\begin{proof}
Assume that $p(v_1)$ and $p(v_2)$ are two polynomials corresponding to two different complete subtrees $t(v_1)$ and $t(v_2)$, respectively. Then $p(v_1) - p(v_2) = P(X) \in \mathbb{Z}/M\mathbb{Z}[x_0, x_0', \dots, x_{h-1}, x_{h-1}']$ is a polynomial of total degree $d$, where $X \subseteq \left \lbrace x_i, x_i' \mid i \in \lbrace1, \dots, h-1\rbrace \right \rbrace$, and $d \leq |\mathcal{L}(t(v_1))|+|\mathcal{L}(t(v_2))|$. Next, we determine $X$ to compute the value of the polynomial. If we randomly choose $X$ from $\mathbb{Z}/M\mathbb{Z}$. According to the Lemma~\ref{lemma:sz}, we can obtain 
\begin{equation*}
{\rm Pr}(P({X})=0)\leq \frac{d}{M} \leq \frac{|\mathcal{L}(t(v_1))|+|\mathcal{L}(t(v_2))|}{M}.
\end{equation*}
\end{proof}

\subsection{Proof of Proposition~\ref{pro:bound}}
\begin{proof}
We evaluate the probability of at least one hash collision in generating $N$ hash values. This problem can be attributed to the famous ``Birthday Problem''~\cite{mckinney1966generalized}, in which the ideal is to distribute hash values uniformly across the given range. First, we consider this ideal case. Since each intermediate step in Eq.~(\ref{eq:hash}) takes the module of $M$, the hash values are mapped to $\mathbb{Z}/M\mathbb{Z}$. Therefore, we have a space of $M$ available hash values. When the hash function generates a new value, the space size is reduced by one. The probability of this case is given by
\begin{eqnarray*}
&& {\rm Pr}_{_{\rm Ideal}}(N) \\
=\!\!\!\!\!\!\!\!&& 1 -  1 \left(\frac{M-1}{M}\right) \left(\frac{M-2}{M}\right) \cdots \left(\frac{M-(N-1)}{M}\right) \\
=\!\!\!\!\!\!\!\!&& 1 - \left(1 - \frac{1}{M} \right) \left(1 - \frac{2}{M} \right) \cdots \left(1 - \frac{N-1}{M} \right).
\end{eqnarray*}
Since $1 - \exp(-x) \leq x$ holds when $x$ is small, we replace the corresponding factor with $1-x \leq \exp(-x)$ and obtain
\begin{eqnarray*}
&& {\rm Pr}_{_{\rm Ideal}}(N)\\
\geq \!\!\!\!\!\!\!\!&& 1 - \exp \left({-{\frac{1}{M}}}\right) \exp \left(-\frac{2}{M}\right) \cdots \exp \left(-\frac{N-1}{M}\right)\\
=\!\!\!\!\!\!\!\!&& 1 - \exp \left( -\frac{N(N-1)}{2M}\right).
\end{eqnarray*}
Next, we consider the worst case. We already know that for complete subtrees $t(v_1)$ and $t(v_2)$, the upper bound on the collision probability between the corresponding polynomial values is ${\rm Pr}_{sz} = (|\mathcal{L}(t(v_1))|+|\mathcal{L}(t(v_2))|)/M$. We choose the one with the highest ${\rm Pr}_{sz}$ and denote it as ${\rm Pr}_{sz}^{\rm max} = 2\xi/M$. The $\xi$ is the maximum number of leaves in all complete subtrees. If we have generated $i-1$ different hash values, when generating the $i$-th hash value, we want this new value to not collide with the previous $i-1$ values. Its probability is $(1-{\rm Pr}_{sz}^{\rm max})^{i-1}$. Therefore, the probability of hash collision in the worst case is
\begin{eqnarray*}
&& {\rm Pr}_{_{\rm Worst}}(N) \\
=\!\!\!\!\!\!\!\!&& 1 -  1 \left(1-{\rm Pr}_{sz}^{\rm max}\right) (1-{\rm Pr}_{sz}^{\rm max})^{2} \cdots (1-{\rm Pr}_{sz}^{\rm max})^{N-1} \\
=\!\!\!\!\!\!\!\!&& 1 - (1 - {\rm Pr}_{sz}^{\rm max})^{\frac{N(N-1)}{2}}\\
=\!\!\!\!\!\!\!\!&& 1 - \left(1 - \frac{2\xi}{M}\right)^{\frac{N(N-1)}{2}}.
\end{eqnarray*}
Furthermore, we can bound ${\rm Pr}_{_{\rm Hash}}(N)$ by
\begin{eqnarray*}
&& {\rm Pr}_{_{\rm Ideal}}(N) \leq {\rm Pr}_{_{\rm Hash}}(N) \leq {\rm Pr}_{_{\rm Worst}}(N) \\
\Longleftrightarrow  && \!\!\!\!\!\!\! 1- e^{-\frac{N(N-1)}{2M}} \! \leq {\rm Pr}_{_{\rm Hash}}(N) \leq 1 - \left(1 - \frac{2\xi}{M}\right)^{\frac{N(N-1)}{2}}\!.
\end{eqnarray*}
$M$ is limited by the range of the numerical expression of the computer. To avoid the hash collision, we take $k$ hash values for one complete subtree. In this case, a hash collision happens if all the $k$ hash values agree. This is equivalent to expanding the size of the space to the $k$-th power, that is, $M^k$. Moreover, ${\rm Pr}_{sz}^{\rm max} = (2\xi/M)^k$. By the same derivation, we obtain
\begin{eqnarray*}
1 - e^{-\frac{N(N-1)}{2M^k}} \! \leq {\rm Pr}_{_{\rm Hash}}(N) \! \leq 1 - \left(1 - \left(\frac{2\xi}{M} \right)^k \right)^{\frac{N(N-1)}{2}}\!.
\label{eq:bound2}
\end{eqnarray*}
\end{proof}

\section{Conclusions}
This paper proposed a Wasserstein graph metric with $L_1$-TED as the ground distance. Experiments showed that the WWLS could better capture slight differences in structure than the comparison methods. Since WWLS belongs to the framework of the WL test, its expressive power is equivalent to that of the WL test. An important conclusion is that although the methods have the same expressive power, adding structural information improves classification accuracy. The similarity between the WL test and the GNN mechanism suggests that the same effect is expected for GNNs. Therefore, the future challenge is to bring this idea to GNNs.


\section*{Acknowledgments}

H. Kasai was partially supported by JSPS KAKENHI Grant Numbers 22K12175, and by Support Center for Advanced Telecomm. Technology Research (SCAT).

\clearpage
\bibliographystyle{unsrt}
\bibliography{WWLS}

\end{document}